\documentclass{article}


 \PassOptionsToPackage{numbers, compress}{natbib}
    



\usepackage{natbib}

\usepackage[utf8]{inputenc} 
\usepackage[T1]{fontenc}    
\usepackage{url}            
\usepackage{booktabs}       
\usepackage{amsfonts}       
\usepackage{nicefrac}       
\usepackage{microtype}      
\usepackage{xcolor}         
\usepackage{graphicx}
\usepackage{subcaption} 
\usepackage[utf8]{inputenc}

\usepackage[pdftex,colorlinks,linkcolor=blue,citecolor=blue,filecolor=blue,urlcolor=blue]{hyperref}

\usepackage[letterpaper,hmargin=1.0in,vmargin=1.0in]{geometry}
\usepackage{svg}


\usepackage[utf8]{inputenc} 
\usepackage[T1]{fontenc}    

\usepackage{url}            
\usepackage{booktabs}       
\usepackage{amsfonts}       
\usepackage{nicefrac}       
\usepackage{microtype}      
\usepackage{xcolor}         

\usepackage[american]{babel}
\usepackage[utf8]{inputenc} 
\usepackage[T1]{fontenc}    
\usepackage{csquotes}
\usepackage{hyperref}       
\usepackage{url}            
\usepackage{booktabs}       
\usepackage{amsfonts}       
\usepackage{nicefrac}       
\usepackage{microtype}      
\usepackage{xcolor}         
\usepackage{array}          
\usepackage{tikz}
\usetikzlibrary{calc,positioning,decorations.pathreplacing,shapes}
\usepackage{amsmath}
\usepackage{amsthm}
\usepackage{amssymb}
\PassOptionsToPackage{normalem}{ulem}
\usepackage{ulem}
\usepackage{dsfont}
\usepackage[noabbrev, capitalise]{cleveref}

\usepackage{enumitem}
\usepackage{xspace}
\usepackage{graphicx}
\usepackage{subcaption}







\theoremstyle{plain}
\newtheorem{theorem}{\protect\theoremname}[section]
  \theoremstyle{plain}
  
  \theoremstyle{plain}
  
  \theoremstyle{remark}
  
  \theoremstyle{plain}
  
  \theoremstyle{plain}
  
  \theoremstyle{plain}
  
  \theoremstyle{plain}

\usepackage{mathtools}

\usepackage{contour}
\usepackage[normalem]{ulem}




\crefname{assumption}{Assumption}{Assumption}

\crefname{example}{Example}{Example}

\crefname{definition}{Definition}{Definition}

\makeatother
\crefformat{equation}{(#2#1#3)}

\numberwithin{equation}{section}

\contourlength{0.8pt}

\renewcommand{\underline}[1]{%
  \uline{\phantom{#1}}%
  \llap{\contour{white}{#1}}%
}





\definecolor{OliveGreen}{rgb}{0,0.6,0}
\definecolor{JaumeBlue}{rgb}{0,0,0.6}

\usepackage{babel}
  \providecommand{\conjecturename}{Conjecture}
  \providecommand{\corollaryname}{Corollary}
  \providecommand{\lemmaname}{Lemma}
  \providecommand{\questionname}{Question}
  \providecommand{\propositionname}{Proposition}
  \providecommand{\remarkname}{Remark}
\providecommand{\theoremname}{Theorem}

%





\DeclareMathOperator\Tr{Tr}

\newtheorem*{theorem*}{Theorem}

\newcommand{\ours}{MixiT} 

\title{
Is Random Attention Sufficient for Sequence Modeling? 
\\Disentangling Trainable Components in the Transformer
}
\author{
  Yihe Dong\\
  Princeton University \\
  \href{mailto:ydong@princeton.edu}{\color{black}{ydong@princeton.edu}}
  \and
  Lorenzo Noci \\
  Princeton University \\
  ETH Zurich \\
  \href{mailto:lorenzo.noci@inf.ethz.ch}{\color{black}{lorenzo.noci@inf.ethz.ch}}
  \and
  Mikhail Khodak \\
  Princeton University \\
  \href{mailto:khodak@wisc.edu}{\color{black}{khodak@wisc.edu}}
  \vspace{10pt}
  \and
  Mufan Li \\
  Princeton University \\
  \href{mailto:mufan.li@outlook.com}{\color{black}{mufan.li@outlook.com}}
}

\date{}
\setlength{\belowcaptionskip}{-10pt}

\begin{document}

\maketitle

\begin{abstract}
The transformer architecture is central to the success of modern Large Language Models (LLMs), in part due to its surprising ability to perform a wide range of tasks -- including mathematical reasoning, memorization, and retrieval -- using only gradient-based learning on next-token prediction. 
While the core component of a transformer is the self-attention mechanism, we question how much, and which aspects, of the performance gains can be attributed to it.  To this end, we compare standard transformers to variants in which either the MLP layers or the attention weights are frozen at initialization.
Surprisingly, we find that attention with \textit{frozen} key and query weights is not only able to form induction heads, but can also perform competitively on language modeling. We formalize this by proving a new expressivity result for transformer models with frozen key and query weights. To further isolate the contribution of attention, we design \ours{}, an architecture with entirely random attention scores, with provably stable signal propagation that overcomes prior depth-wise scaling challenges in random transformers. We use the successes and failures of \ours{} to understand the role each transformer component plays, such as attention being largely responsible for in-context reasoning, and MLPs being responsible for, but collaborates with attention, on knowledge memorization.
Our results suggest that the transformer architecture has a built-in inductive bias towards forming specialized circuits, as it does even without learnable attention weights.\footnote{Our code is publicly available at \url{https://github.com/princeton-pli/MixiT}.}

\end{abstract}

\section{Introduction}


Transformers \cite{vaswani2017attention} have rapidly become the workhorse architecture in modern machine‑learning systems, powering state‑of‑the‑art models in language, vision, and scientific domains \cite{dosovitskiy2020image, team2023gemini, guo2025deepseek}. Their success is typically attributed to the self‑attention mechanism, which allows every token to aggregate information from the entire sequence and has been linked to emergent abilities such as long‑range retrieval, algorithmic reasoning, and in‑context learning. Yet we lack a precise answer to a fundamental question:
which degrees of freedom inside the Transformer are truly necessary for these behaviours, and which can be simplified away without harming performance?

Prior work has probed the internals of trained Transformers. Studies of attention maps consistently report the emergence of induction heads that copy information forward and enable in‑context retrieval \cite{olsson2022context}. Another line of work has focused on whether and how Transformers' capabilities can emerge \cite{allen2024physics, ramesh2023compositional, jain2023mechanistically} by studying synthetic datasets with \citet{zhong2024algorithmic} showing that models with frozen self‑attention but trainable embeddings still solve many algorithmic tasks. These results hint that different parts of the architecture are responsible for modeling different tasks.

A standard Transformer block, however, contains several interacting components: an attention block, with learnable, input-dependent queries and keys, and MLP blocks composed of fully connected layers.
To exemplify the complexity of the interaction, notice that the attention weights can change their value both through changes in the queries/keys parameters, as well as as through the residual‑stream representations that feed those projections.
Hence, because all the components are trained together, it is hard to discern the contribution of each individual one towards solving a given task. 
%
%
%
%
In this work, we study the role of each Transformer's component by freezing different parts of the architecture to their values at initialization. In partiucular, we consider the following simplified variants:
\begin{itemize}
    \item \emph{Frozen‑QK}, preserves the conventional attention structure but freezes the query and key weight matrices, allowing only the value weight matrix to be learned. 
    \item \emph{Frozen-MLP}, where the weight matrices of the MLP block are frozen to their initial value.
    \item \emph{MixiT} (Mixing Transformer), a Transformer variant in which the attention sub‑layer is replaced by a fixed, randomly initialized mixing matrix. After initialisation the mixing weights are frozen, preventing the model from adapting interactions during training or conditioning them on input content. Learning therefore takes place exclusively in the embedding and MLP blocks and in residual branches.
\end{itemize}

\begin{figure}[!t]
    \centering
    \includegraphics[width=\linewidth]{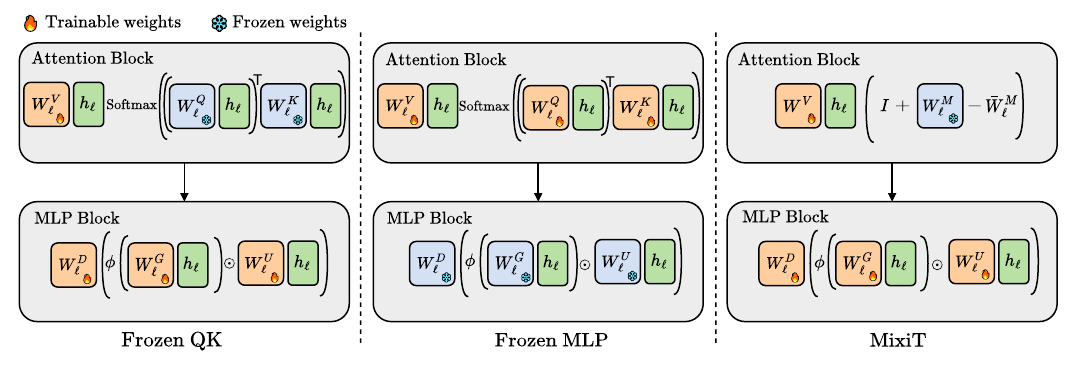}
    \caption{Variants of the Llama Transformer model that we study.}
    \label{fig:architectures}
\end{figure}

Comparing these three simplified models with a fully‑trainable Transformer makes it possible to separate the role of (i) learned token‑dependent weights and (ii) the mere presence of a mixing operation, as well as (iii) understand the role of MLPs. 

We compare across several categories of tasks: mathematical reasoning, sentiment classification, memorization, information retrieval, hop$_k$, and language modeling. This allows us to inspect whether and how the different parts of the architecture are able to solve primitive tasks that relate to a range of basic reasoning and memorization skills.
Here we summarize our main contributions:
\begin{itemize}
    \item We present the surprising finding that Frozen-QK, which has random attention weights, can perform competitively with the standard transformer on language modeling tasks. Indeed, Frozen-QK develops the ability to form specialized circuits such as induction heads during training. This suggests that learnable attention weights are \textit{not} required to form specialized circuits.
    \item We analyze Frozen-QK's expressiveness, and show that it can indeed learn a wide class of sequence-level functions in Theorem~\ref{thm:frozen-qk}.
    \item We identify an explanation of the subspace selection hypothesis for random transformers \citep{zhong2024algorithmic}, namely, representation collapse, and connect it with the phenomenon that random models with standard initializations suffer degenerating performance with respect to depth, through an analysis of the covariance of hidden representations.
    %
    \item To resolve this obstruction towards scaling up model depth, we introduce a principled architecture -- \ours{}, as part of our spectrum of models, and use its successes at pure memorization tasks v.s. failures at pure in-context reasoning tasks such as k-hop, to derive conclusions on the roles of MLPs vs attention. Furthermore, we prove \ours{}'s training stability, by analyzing its covariance SDE, in Theorem~\ref{thm:mixit_cov_sde}.
    \item We find that, in all but induction heads tasks and language modeling, \ours{} achieves performance comparable to the fully trained transformer and Frozen-QK. These results suggest that for a wide range of tasks, "attention mixing is all you need" regardless of its specific form, in the sense that learned query‑key interactions are not required. 
    %
    %
    %
\end{itemize}

Our results indicate that the Transformer architecture has some built-in inductive bias towards algorithmic abilities, namely the ability to form specialized circuits, as evidenced by the presence of induction heads even without learnable attention weights.
More broadly, our results underpin the importance of having heterogeneous components that are more specialized in efficiently solving modular subtasks, enabling the model to express complex, wide-ranging behaviors.







\section{Models Specification}
We consider a decoder-only Transformer based on the widely used Llama architecture \cite{touvron2023llama}. Given an input sequence $x \in \mathbb{R}^{V \times m}$, where $V$ is the vocabulary size and $m$ is the sequence length, embedded with a linear map to the hidden states $h_0 = W_{emb} \, x$ where $W_{emb} \in \mathbb{R}^{V \times n}$, where $n$ is the width of the model. At its core, the transformer uses $L$ stacked modules alternating the causal multi-head self-attention layers and MLP layers. Each self attention head is defined as:
\begin{equation}
\label{eq:attn_def}
    \text{Attn}(h_\ell) = W^v_\ell h_\ell  \, \text{Softmax}\left(\frac{1}{\sqrt{n_h}} Q_\ell^T K_\ell \right) \, ,
    \quad 
    Q_\ell = W_\ell^Q h_\ell \, ,
    \quad
    K_\ell = W_\ell^K h_{\ell} \, ,
\end{equation}
where $W^Q_\ell, W^K_\ell, W^v_\ell \in \mathbb{R}^{n_h \times n}$ are the queries' and keys' weights, and the outputs across multiple heads are concatenated.
The gated MLP layer is defined as:
%
\begin{equation}
\label{eq:mlp_def}
    \text{MLP}(h_\ell) = W^D_{\ell}\left(\phi(W_\ell^{G}h_\ell)\odot W_\ell^U h_\ell\right) \, .
\end{equation}
where $W_\ell^{G}, W_\ell^U \in \mathbb{R}^{{n_m \times n}}$ and $W^D_\ell \in \mathbb{R}^{{n \times n_m}}$. $n_h$ and $n_m$ are the dimensions of the queries/keys for each head and MLP projections. We also use skip connections in both blocks with a pre-normalization scheme \cite{xiong2020layer} and causal masking. We apply rotary embeddings \cite{su2024roformer} to the queries and keys of each layer. As in common practice, we use $n_{m} = 4n$ and $H n_h = n$.
\paragraph{Models with Frozen Weights.} 
In the Frozen-QK model, we set $W^Q_\ell, W^K_\ell$ to their value at initialization, and for the Frozen-MLP models we freeze $W^D_\ell, W^G_\ell, W^U_\ell$ for all layers. 
\paragraph{\ours{} -- Random Static Attention.}
We also design a model where the attention map itself is frozen and, to achieve that, input-independent. In the simplest case, this can be obtained by having a random matrix $M_\ell \in \mathbb{R}^{m \times m}$ entries with $\mathcal{N}(0, 1/m)$ entries, where the factor of $1/m$ acts as a variance-preserving normalizer. To ensure a stable forward pass in terms of depth and width scaling, we follow the principles of attention shaping \cite{noci2023shaped} and propose the following:
\begin{equation}
\label{eq:mixit}
    \text{Attn}(h_\ell) = W^v_\ell h_\ell \left(I + 
    W^M_\ell - \bar W^M_\ell \right) \,, 
    \quad 
    W^v_{\ell,ij} \overset{iid}{\sim} \mathcal{N}(0,\tfrac{1}{n}) \,, 
    W^M_{\ell,ij} \overset{iid}{\sim} \mathcal{N}(0,\tfrac{1}{\sqrt{nm}}) \, ,
\end{equation}
where $W^M_\ell$ is frozen at initialization and $\bar W^M_\ell$ contains the column-wise empirical average of $W^M_\ell$, to ensure that each row sums up to 1. In the Appendix, 
, we show that this attention has a stable forward pass, in the sense that the kernel of the activations has a well-defined depth-and-width limit, converging to a stochastic differential equation (SDE) \cite{li2022neural,noci2023shaped}. When we adopt this architecture, all the weights, excluding the random attention matrix, are trainable. 
Notably, the following convergence result implies the stability of the forward pass, in particular ruling out the numerical degeneracy such as rank collapse and vanishing gradients \citep{dong2021attention,noci2022signal}. 
%
\\
\begin{theorem}[\ours{} Covariance SDE]
\label{thm:mixit_cov_sde}
Consider the \ours{} recursion $h_{\ell+1} = \text{Attn}(h_\ell)$ defined by \cref{eq:mixit} at initialization. 
Then as the width $n$ and depth $d$ go to infinity with $\frac{d}{n} \to \bar\tau > 0$, the upper triangular entries of the covariance matrix $\Phi_\ell = \frac{1}{n} h_\ell^\top h_\ell$ flattened to a vector in $\mathbb{R}^{m(m+1)/2}$ converges to the solution of the following SDE 
\begin{equation}
    d\Phi_\tau = \left[ \frac{1}{m} \Tr(\Phi_\tau) - M(\Phi_\tau) \right] \, d\tau + [\Sigma^v(\Phi_\tau) + \Sigma^M(\Phi_\tau)]^{1/2} \, dB_\tau \,, 
\end{equation}
where $M(\Phi) = \frac{1}{m^2} \sum_{\alpha\beta=1}^m \Phi^{\alpha\beta}$ is the average over all entries, $B_\tau$ is a standard Brownian motion in $\mathbb{R}^{m(m+1)/2}$, $\Sigma^v(\Phi)^{\alpha\beta,\gamma\delta} = \Phi^{\alpha\gamma} \Phi^{\beta\delta} + \Phi^{\alpha\delta} \Phi^{\beta\gamma}$ and 
\begin{equation}
    \Sigma^M(\Phi)^{\alpha\beta,\gamma\delta} = 
    \delta_{\alpha\gamma} C(\Phi^{\bullet\beta},\Phi^{\bullet\delta}) 
    + \delta_{\alpha\delta} C(\Phi^{\bullet\beta},\Phi^{\bullet\gamma}) 
    + \delta_{\gamma\beta} 
    C(\Phi^{\bullet\delta},\Phi^{\bullet\alpha}) 
    + \delta_{\beta\delta} 
    C(\Phi^{\bullet\alpha},\Phi^{\bullet\gamma}) \,, 
\end{equation}
where $\delta_{\alpha\gamma}$ is the Kronecker delta, 
$C(\Phi^{\bullet\beta},\Phi^{\bullet\delta}) = \frac{1}{m} \langle \Phi^{\bullet\beta}, \Phi^{\bullet\delta} \rangle - \overline \Phi^{\bullet\beta} \overline \Phi^{\bullet\delta}$, 
$\Phi^{\bullet\beta} = [ \Phi^{\alpha\beta} ]_{\alpha=1}^m$ is the $\beta$-th column vector, 
and $\overline \Phi^{\bullet\beta} = \frac{1}{m} \sum_{\alpha} \Phi^{\alpha\beta}$ is the average. 
\end{theorem}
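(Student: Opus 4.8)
The plan is to follow the Markov-chain-to-diffusion route used for neural covariance SDEs in \cite{li2022neural,noci2023shaped}: exhibit a finite-dimensional Markov chain tracking $\Phi_\ell$, compute its one-step conditional mean and covariance, show they equal $\tfrac1n$ times the claimed drift and diffusion coefficients up to $o(1/n)$ (uniformly on compacta), and conclude via a standard generator / martingale-problem convergence theorem, with time step $d\tau = 1/n$ per layer so that the depth-$d$ process runs on $[0,\bar\tau]$.

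First I would verify that $(\Phi_\ell)_{\ell\ge0}$ is itself Markov. Conditionally on $\Phi_\ell$, the matrix $g_\ell \defequal W^v_\ell h_\ell$ has i.i.d.\ rows $\sim\mathcal N(0,\Phi_\ell)$ by Gaussianity of $W^v_\ell$; since $W^M_\ell$ is independent fresh randomness and, writing $\wt W^M_\ell \defequal W^M_\ell - \bar W^M_\ell$ and $B_\ell \defequal I + \wt W^M_\ell$, we have $\Phi_{\ell+1} = \tfrac1n B_\ell^\top g_\ell^\top g_\ell B_\ell$, the conditional law of $\Phi_{\ell+1}$ depends on the past only through $\Phi_\ell$. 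Thus it suffices to analyze the $m(m+1)/2$-dimensional chain of upper-triangular entries of $\Phi_\ell$ (which is automatically positive semidefinite).

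The core is the one-step expansion. With $S \defequal \tfrac1n g_\ell^\top g_\ell$ I would write
\begin{equation}
  \Phi_{\ell+1} - \Phi_\ell \;=\; \underbrace{B_\ell^\top (S-\Phi_\ell) B_\ell}_{W^v\text{ part}} \;+\; \underbrace{\bigl(B_\ell^\top \Phi_\ell B_\ell - \Phi_\ell\bigr)}_{W^M\text{ part}},
\end{equation}
and use $W^v_\ell\perp W^M_\ell$, with both parts conditionally mean-zero apart from the drift, so their contributions to the conditional covariance add. For the $W^v$ part, $S-\Phi_\ell$ is a centered Wishart fluctuation of size $O(n^{-1/2})$ with, by Wick's theorem, $\cov\bigl[(S-\Phi_\ell)^{\alpha\beta},(S-\Phi_\ell)^{\gamma\delta}\mid\Phi_\ell\bigr] = \tfrac1n(\Phi^{\alpha\gamma}\Phi^{\beta\delta}+\Phi^{\alpha\delta}\Phi^{\beta\gamma}) = \tfrac1n\Sigma^v(\Phi_\ell)^{\alpha\beta,\gamma\delta}$, and conjugation by $B_\ell = I + o(1)$ perturbs this only at order $o(1/n)$; this is exactly the driftless covariance SDE of a linear (shaped) MLP layer. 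For the $W^M$ part, $B_\ell^\top\Phi_\ell B_\ell - \Phi_\ell = \wt W^{M\top}_\ell\Phi_\ell + \Phi_\ell\wt W^M_\ell + \wt W^{M\top}_\ell\Phi_\ell\wt W^M_\ell$; the column-centering gives $\ee{\wt W^M_{\ell,ki}\wt W^M_{\ell,lj}\mid\Phi_\ell} = \sigma^2\delta_{ij}(\delta_{kl}-\tfrac1m)$, from which the quadratic term has conditional mean producing precisely the drift coefficient $\tfrac1m\Tr(\Phi_\ell) - M(\Phi_\ell)$ (times $\tfrac1n$), while its own fluctuation and its cross-covariance with the two linear terms are $o(1/n)$; and the two linear terms have conditional covariance $\tfrac1n\Sigma^M(\Phi_\ell)$, which is where the four Kronecker-delta terms and the centered column inner products $C(\Phi^{\bullet\beta},\Phi^{\bullet\delta}) = \tfrac1m\langle\Phi^{\bullet\beta},\Phi^{\bullet\delta}\rangle - \overline\Phi^{\bullet\beta}\,\overline\Phi^{\bullet\delta}$ arise. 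Summing, the conditional mean of $\Phi_{\ell+1}-\Phi_\ell$ is $\tfrac1n\bigl[\tfrac1m\Tr(\Phi_\ell)-M(\Phi_\ell)\bigr]+o(1/n)$ and its conditional covariance is $\tfrac1n\bigl[\Sigma^v(\Phi_\ell)+\Sigma^M(\Phi_\ell)\bigr]+o(1/n)$, i.e.\ the SDE generator with the stated coefficients.

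It remains to promote this local computation to process-level convergence. A fourth-moment bound $\ee{\lVert\Phi_{\ell+1}-\Phi_\ell\rVert^4\mid\Phi_\ell}=O(n^{-2})$ holds since the increment is a fixed-degree polynomial in Gaussian entries of the stated variances; and non-explosion follows from $\ee{\Tr\Phi_{\ell+1}\mid\Phi_\ell} = \Tr\Phi_\ell + \sigma^2\bigl(m\Tr\Phi_\ell - \mathbf 1^\top\Phi_\ell\mathbf 1\bigr) \le (1+\tfrac1n)\Tr\Phi_\ell$, using $0\le \mathbf 1^\top\Phi_\ell\mathbf 1 \le m\Tr\Phi_\ell$ for PSD $\Phi_\ell$, together with a discrete Grönwall / Doob argument giving $\sup_n\ee{\sup_{\ell\le d}\Tr\Phi_\ell}<\infty$. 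Tightness of the interpolated processes follows; every subsequential limit solves the martingale problem for the stated SDE; and since the coefficients are polynomial (hence locally Lipschitz) and the non-explosion estimate holds, the martingale problem is well-posed, so the full sequence converges in distribution to the SDE solution. I expect the main obstacle to be the bookkeeping in the conditional-covariance computation — tracking the $\tfrac1m$, $\tfrac1{m^2}$ and Kronecker-delta factors so that the $W^v$ and $W^M$ blocks land exactly on $\Sigma^v+\Sigma^M$ — together with checking that the quadratic-in-$\wt W^M$ term feeds only the $O(1/n)$ drift and contributes nowhere else at that order.
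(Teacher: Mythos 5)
Your proposal follows the same high-level route as the paper — isolate the $W^v$ Wishart-fluctuation contribution (which gives $\Sigma^v$, citing \citet{li2022neural}) from the mixing-matrix contribution (which gives the drift and $\Sigma^M$), add them, and invoke Markov-chain-to-SDE convergence — and your moment computations for $\widetilde W^M$ (the $(\delta_{kl}-\tfrac1m)\delta_{ij}$ covariance, the resulting drift $\tfrac1m\Tr\Phi - M(\Phi)$, and the centered-column inner products in $\Sigma^M$) agree exactly with the paper's. The one genuine difference is that you make the additivity step rigorous where the paper merely asserts it: the paper states ``it is sufficient to isolate the contribution of the mixing component alone, and we will add the effect of the two components,'' then analyzes the purely mixing recursion $h_{\ell+1}=h_\ell B_\ell$ in isolation; you instead exhibit the exact decomposition of the joint update, $\Phi_{\ell+1}-\Phi_\ell = B_\ell^\top(S-\Phi_\ell)B_\ell + (B_\ell^\top\Phi_\ell B_\ell - \Phi_\ell)$ with $S = \tfrac1n g_\ell^\top g_\ell$, use independence of $W^v_\ell$ and $W^M_\ell$ to decouple the two blocks, and explicitly argue that the cross terms (products of the $O(n^{-1/2})$ Wishart fluctuation with the $O(n^{-1/2})$ entries of $\widetilde W^M$) are conditionally mean-zero with variance $O(n^{-2})=o(1/n)$, hence contribute to neither drift nor diffusion. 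You also spell out the tightness/non-explosion step via the supermartingale bound $\ee{\Tr\Phi_{\ell+1}\mid\Phi_\ell}\le(1+\tfrac1n)\Tr\Phi_\ell$ and well-posedness of the martingale problem, whereas the paper simply cites the Skorohod-topology convergence result. So this is the same proof made more careful at its one hand-wavy joint; your version buys a verifiable justification of the ``add the two effects'' claim, at the cost of a bit more bookkeeping. One minor caution: you implicitly take $\sigma^2 = \tfrac1{nm}$ for the entries of $\widetilde W^M$ (consistent with the prefactor used inside the paper's proof), whereas the theorem statement and \cref{eq:mixit} write the variance as $\tfrac1{\sqrt{nm}}$; the paper is internally inconsistent here, and you should flag which normalization you are assuming so the reader knows your scalings line up with the proof rather than the model display.
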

The full proof can be found in the Appendix.

\paragraph{Positional embedding.}
As the random attention matrix $I + \frac{1}{\sqrt{mn}} W^M_\ell - \bar W^M_\ell$ in \ours{} does not depend on the input, the rotary positional embedding \citep{su2024roformer} standard in Llama models cannot be applied to \ours{}, as rotary embeddings are added to learned key and query embeddings, which are not present in \ours{}.
Hence, we implement a learnable positional embedding for each token position in the sequence, and add it to the corresponding token embedding in the first layer.





\section{Experiments}


\subsection{Tasks}
We benchmark on a variety of tasks, covering categories including basic mathematical reasoning, memorization, sentiment classification, and language modeling.
The basic mathematical reasoning and memorization tasks are based on tasks used in \citep{zhong2024algorithmic}, with increased difficulty on some tasks to better reflect differences between architectures. More experimental details on these tasks can be found in Appendix~\ref{sec:exp_details}.

\paragraph{Decimal Addition.}
For the decimal addition task, the model learns to add two integers with the same number of digits. We randomly sample 50,000 pairs of ten-digit numbers, and train the model to predict their sum. The test set consists of 4,000 such sequences not in the training set. An example is $1234567890 + 2345678901 \to 3580246791$.

\paragraph{Needle in a Haystack (Retrieval).}

Each training instance encodes a small randomly generated sequence of pairs followed by a single query, and the model is required to emit the value associated with that query one step later.   
We uniformly sample a sequence length \(m \sim \mathcal{U}\{1,\dots,m_{\max}\}\), where $m_{\max}$ is the maximum sequence length.  
We then sample $m$ \emph{keys}  \(\{k_i\}_{i=1}^{m}\) iid from the set \(\{\frac{V}{2},\dots,V\}\) and $m$ \emph{values}  \(\{v_i\}_{i=1}^{m}\) from \(\{1,\dots,\tfrac{\text{V}}{2}-1\}\), where $V=256$ is the vocabulary size.  
The resulting keys are interleaved with their values to form the prefix $[(k_1,\,v_1),\, (k_2,\,v_2),\, \dots,\, (k_{m},\,v_{m})]$.
A query key \(k_q\) with $q\in [m]$ is chosen uniformly at random from the keys, and appended to the sequence. The goal is to predict the value corresponding to the query token. 
This task isolates retrieval ability, and probes associative recall. 
We sample 40,000 sequences for training, and 4,000 for testing. 
Transformer-based models typically solve this task by forming induction heads \citep{olsson2022context}.

\paragraph{$k$-hop Induction Heads.}
Following \citep{sanford2024induction}, the $k$-hop induction heads task, or hop$_{k}$, recursively completes bigrams auto-regressively, 
by repeatedly predicting the token that followed the last occurrence of the currently-considered token.
As an example: given the input $X=a\textcolor{orange}{dc}\textcolor{blue}{ad}a$, the 2-hop induction heads prediction is $\textcolor{orange}{c}$.
\citep{sanford2024induction} showed that hop$_k$ is solvable by a $O(\log k)$-depth transformer.
Hence to achieve fair comparison across architectures, we fix the model depth at 5 layers, and search over other hyperparameters.

\paragraph{Modular Addition.}
This task evaluates the model's ability to perform addition, modulo a given prime $p$. In our case we sample 40,000 pairs $(a, b)$ of integers, each within the range $[1, p]$, for $p=599$, and train the model to predict $a + b \text{ mod } p$.

\paragraph{Parentheses Balancing (Dyck-1).}
The parentheses balancing task learns to predict whether a given parentheses sequence is balanced. Within any prefix in the sequence, the number of closing parentheses is less than or equal to the number opening parentheses. Hence this task is solvable by a linear time algorithm. We randomly sample 100,000 parentheses sequences of length 40 for training, while the test set consists of 4,000 such sequences not in the training set. An example is ``(()" $\to \text{False}$.

\paragraph{Memorization.}
We follow the procedure in \citet{zhong2024algorithmic}: we sample $512^2$ key-value pairs, where each key is independently sampled from its value, which is an integer in $[512]$. Because the key-value mapping is random, any success reflects the model’s ability to memorise arbitrary associations.   We measure success with the number of \emph{bits per parameter} that the model can store, defined as \texttt{$\# \text{total\_bits} \times \text{model\_acc} / \text{total\_trainable\_params}$}, where \texttt{$\text{model\_acc}$} is the training accuracy on the task. Notice that for this problem, storing one pattern requires $\log_2 512 = 9$ bits, thus \texttt{$\# \text{total\_bits} = 9 \cdot 512^2$}. 

\paragraph{Sentiment Classification.}
We use the Yelp polarity reviews dataset \citep{zhang2015character} to test each model variation's ability to predict a review's sentiment, i.e. whether a review is positive or negative.

\paragraph{Language Modeling.}
To test the model's ability to model natural language, we train the model to perform next-token-prediction.
We use two datasets, Wikitext-103 \citep{merity2016pointer} and Fineweb-edu \citep{penedo2024fineweb}. Wikitext-103 consists of 1,801,350 cleaned-up Wikipedia articles, with test set of size 4358. And Fineweb-edu consists of top-quality entries collected from web-crawled data, focusing on educational text. We randomly sample 1,048,576 entries for training, and 4096 to test.

\subsection{Model training}

For each task, we perform a grid search over a range of hyperparameters to train all model variations. The optimal hyperparameters are determined using a grid search specific for each task, which are detailed in the Appendix \ref{sec:exp_details}.
All model variants are trained on one to four H100 GPUs, depending on task complexity.

\section{Results}

%
\paragraph{Frozen-QK can solve induction heads tasks such as retrieval and hop$_k$.}
We test our spectrum of models on tasks that require forming induction heads \citep{elhage2021InductionHead}: needle-in-a-haystack retrieval \citep{olsson2022context} and $k$-hop induction heads (hop$_k$) \citep{sanford2024induction}. As shown in Table~\ref{tab:induction-head-tasks}, Frozen-QK is able to solve these tasks on par with the standard fully trained transformer model, indicating that induction heads can form even with frozen query and key weights.
On the other hand, \ours{} is unable to solve these tasks, due to its inability to form specialized circuits as the attention scores are frozen at random initialization.  

\begin{table}[h!]
\centering
\small
\begin{tabular}{lcccc}
\toprule
Task \textbackslash\ Model & Standard & Frozen-MLP  & Frozen-QK & \ours{}  \\
\midrule
$k$-hop Induction Heads $\uparrow$  & 99.99\% & 99.89\% & 96.73\% & 48.58\% \\
Retrieval   $\uparrow$ & 100\% & 100\% & 97.01\% & 11.24\%\\
\bottomrule
\end{tabular}
\vspace{10pt}
\caption{ 
Accuracies for tasks that require forming induction heads. 
Frozen-QK is able to solve the tasks on par with the standard fully trained transformer model, reinforcing the observation that induction heads can form even with frozen query and key projectors, e.g. as evident in Figure~\ref{fig:induction_head}.
Furthermore, as the attention module plays the key role in forming induction heads \citep{elhage2021InductionHead, crosbie-shutova-2025-induction}, Frozen-MLP, with its trainable attention modules, is able to perform to almost perfect accuracy.
On the other hand, \ours{} is unable to solve these tasks, due to its inability to form specialized circuits as the attention scores are frozen at random initialization.  
The retrieval results are for sequences of maximum number of key-value pairs $m_{\text{max}}=30$.}
\label{tab:induction-head-tasks}
\end{table}

More detailed results on the retrieval task are shown in \cref{fig:retrieval_accuracies}, where we test the performance of each architecture at varying task complexities, controlled by the maximum number of key-value pairs $m_{\max}$ in the sequence. 
As shown, Frozen-QK, Frozen-MLP, and the standard transformer all perform notably better than \ours{} as the retrieval complexity increases; and Frozen-QK eventually deteriorates in quality faster than Frozen-MLP and the standard transformer, underscoring the key role attention plays in forming induction heads.


\begin{figure}[t!]
    \centering
        \centering        \includegraphics[width=0.8\linewidth]{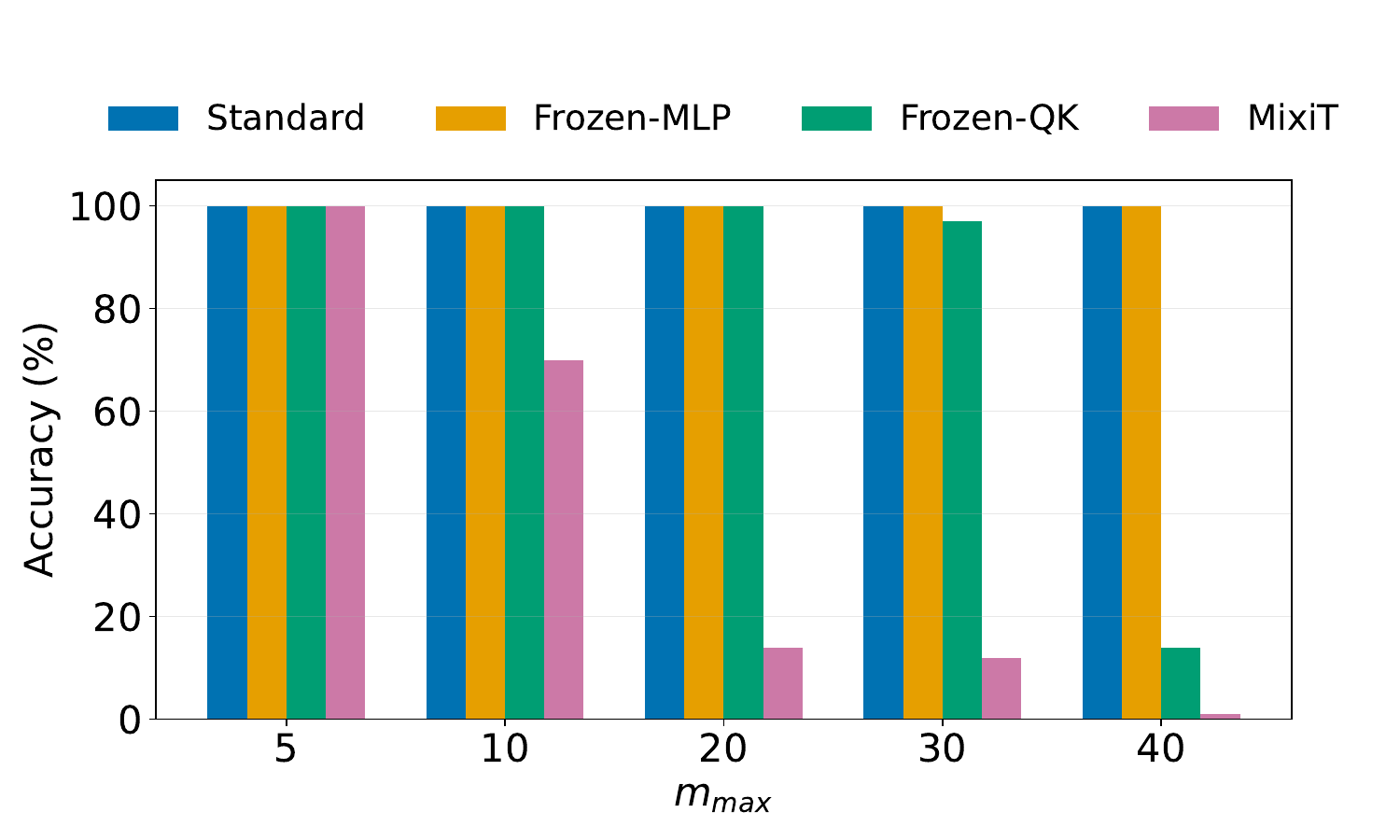}
        \caption{Retrieval accuracy as a function of the number of pairs in the
        sequence $m_{\max}$, which encodes task complexity.
        While \ours{} has rapidly deteriorating performance with respect to task complexity, Frozen-QK, with its ability to form induction heads, reaches its performance ceiling much more slowly.}        \label{fig:retrieval_accuracies}
\end{figure}


\paragraph{Frozen-QK can perform competitively on language modeling.}
Surprisingly, Frozen-QK comes close to the standard transformer in terms of perplexity, as shown in Table~\ref{tab:language-modeling}.
This indicates that trainable attention weights are \textit{not} always required for successful language modeling. Indeed, as Figure~\ref{fig:induction_head} shows, specialized circuits such as induction heads can form even in Frozen-QK.
Less surprisingly, \ours{} lags behind the standard Transformer, supporting the hypothesis that input-dependent learned attention patterns, such as induction heads, are necessary for language modeling, corroborating earlier works \citep{olsson2022context, crosbie-shutova-2025-induction}.

\begin{table}[h!]
\vspace{10pt}
\centering
\small
\begin{tabular}{lcc}
\toprule
Model \textbackslash\ Task &  Wikitext $\downarrow$ & Fineweb-edu $\downarrow$ \\
\midrule
Standard & 2.78 & 3.05 \\
Frozen-QK & 3.07 & 3.16 \\
\ours{} & 3.73 & 4.08 \\
\bottomrule
\end{tabular}
\vspace{10pt}
\caption{Performance on language modeling tasks, in terms of \textit{log} perplexity. 
Frozen-QK comes surprisingly close in performance to the standard Transformer, despite having random static query and attention weights. 
\ours{} has notably worse performance, supporting the hypothesis that input-dependent learned attention patterns, such as induction heads, are necessary for good language modeling.
}
\label{tab:language-modeling}
\end{table}

\paragraph{Random static attention can perform certain algorithmic tasks.}
Table~\ref{tab:algo} contains the main results on algorithmic and sentiment classification tasks,
many with a large memorization focus. Both Frozen-QK and \ours{} are able solve such tasks. They are competitive with, and can even outperform, the standard fully trained transformer. These results highlight that input-dependent attention is not required for solving such algorithmic tasks.
\begin{table}[h!]
\centering
\small
\begin{tabular}{lccccr}
\toprule
Model \textbackslash\ Task & Decimal Addition$\uparrow$ & Dyck-1$\uparrow$ & Modular addition$\uparrow$ & Memorization$\uparrow$ & Yelp$\uparrow$  \\
\midrule
Standard & 98.58\% & 95.80\% & 100\% & 100\% & 90.55\% \\
Frozen-QK & 100\% & 97.38\% & 100\% & 100\% & 90.86\%  \\
\ours{} & 100\% & 96.17\% & 100\% & 100\% & 92.56\%  \\
\bottomrule
\end{tabular}
\vspace{10pt}
\caption{\ours{} performance on algorithmic and sentiment classification tasks. As shown, both Frozen-QK and \ours{} are able solve such tasks. They are competitive with, and can even be superior than, the standard transformer.
}
\label{tab:algo}
\end{table}

\paragraph{MLPs are crucial, and collaborate with attention, on memorization.}
%
\cref{tab:mem} shows the accuracy and the storage capacity via \emph{bits per parameter}. We find that the standard transformer stores $2.98$ bits per parameters, which is slightly higher than in previous works \cite{allen2024physics, zhong2024algorithmic}. Most of the drop occurs in the Frozen-QK model, with $1.13$ bits per parameters, while Frozen-MLP and \ours{} have similar storage capabilities at $2.25$ and $2.18$, respectively. Note that these results are yielded in a setting where the accuracies are not saturated at 100\%, to give an accurate representation of bits per parameter. Hence they do not contradict the results in Table~\ref{tab:algo}.

\begin{table}[h!]
\centering
\small
\begin{tabular}{lccc}
\toprule
Model &  Memorization Accuracy $\uparrow$ & Bits Per Parameter $\uparrow$ & Trainable Parameters  \\
\midrule
Standard     & 100\% & 2.98 & 790400 \\
Frozen-MLP   & 19\% & 1.13 & 394880  \\
Frozen-QK    & 69\%  & 2.25 & 724352  \\
\ours{}      & 67\%  & 2.18 & 724736 \\
\bottomrule
\end{tabular}
\vspace{10pt}
\caption{ 
Standard Transformers outperform all the alternatives in terms of memorization capability, which suggests that MLPs and attention \textit{collaborate} to remember knowledge. This provides further evidence for recent findings such as knowledge circuits \citep{knowledgeCircuits} and query localization \citep{queryLocalization}.
Freezing the MLPs causes the most performance drop, indicating that they are the biggest factor when it comes to memorization.
Notice that \ours{} has slightly more parameters than Frozen-QK because of additional trainable positional embeddings.}
\label{tab:mem}
\end{table}

These results suggest that (1) the MLPs are largely responsible for memorization, however (2) there is a non-negligible additional contribution given by the integration of MLPs with learnable attention weights. 
This non-negligible additional contribution provides further evidence for recent findings such as knowledge circuits \citep{knowledgeCircuits} and query localization \citep{queryLocalization}, in that MLPs and attention \textit{collaborate} to remember knowledge. In particular, the disproportionately large increase in the learned bits per parameter from Frozen-QK to the fully trained transformer, from 2.25 to 2.98, suggests that the gain in accuracy is more than what can be accounted for by a mere increase in learnable parameters.

\paragraph{Performance with respect to number of heads for \ours{}.}
For some tasks, we observe that increasing the number of attention heads in \ours{} can notably improve performance, as demonstrated by the decimal addition and Dyck-1 parentheses balancing tasks in Table~\ref{tab:n_head}. 
Intuitively, since each attention head uses a different random attention matrix, more attention heads gives the learnable MLP components more diverse attention patterns to choose from based on the input, hence lessening the disadvantage of static attention.
Note that purely increasing the number of heads, without increasing the hidden dimension, reaches diminishing returns, as the per-head embedding dimension decreases proportionally, restricting expressiveness.


\begin{table}[h!]
\centering
\small
\begin{tabular}{lcccc}
\toprule
Task \textbackslash\ Number of heads &  4 & 16 & 64 & 256 \\
\midrule
Decimal addition & 34.71\% & 34.70\% & 50.72\% & 91.87\%\\
Dyck-1 & 77.93\% & 81.68\% & 89.38\% & 91.83\%  \\
Yelp sentiment classification & 92.52\% & 92.56\% & 92.48\% & 91.72\% \\
\bottomrule
\end{tabular}
\vspace{10pt}
\caption{Accuracy with respect to the number of heads on various tasks for \ours{}. The hidden dimension is 512 for decimal addition and Dyck-1, and 1024 for Yelp sentiment classification. Increasing the number of heads increases the number of random attention matrices, giving learnable MLPs more diverse token mixing patterns to choose from based on the input, which can mitigate the disadvantage of static attention.
However, a positive performance correlation does not appear in all tasks, such as in Yelp. 
}
\label{tab:n_head}
\end{table}

However, we do not observe this performance boost consistently across tasks. For instance, for Yelp sentiment classification, the performance is invariant with respect to the number of heads. This phenomenon remains interesting work for future study.



\section{Discussion}

\paragraph{Circuit learning and task separation.}
As the attention matrix in \ours{} is static and input-independent, \ours{} cannot adapt to each input and form specific circuits such as induction heads \citep{olsson2022context}. Induction head circuits look for recent occurrences of the current token, and attends to the tokens that follow with increased probability. This allows the standard transformer to easily adapt to the input context in language modeling. Hence, it is no surprise that \ours{} lags behind standard transformer for language modeling. It is perhaps more surprising that the perplexity comes close to that of the standard transformer.

The near-perfect performance on certain algorithmic tasks in Table~\ref{tab:algo} suggests that induction heads and other specialized input-dependent circuits are not required on these tasks. Hence the \ours{} performance on a given task can serve as a litmus test for whether in-context reasoning is required for that task.
For instance, \ours{} is able to perform well on the Yelp reviews dataset, despite the complexity of language used in reviews. This indicates that sentiment can be largely judged by the collective token embeddings, as opposed to next-token prediction in language modeling tasks, which requires retrieving specific details from the context, a task induction heads are apt at.

\paragraph{Learnable attention is not required to form induction heads.}
Interestingly, as shown in Figure~\ref{fig:induction_head} and demonstrated by its performance shown in Table~\ref{tab:induction-head-tasks} and Table~\ref{tab:language-modeling}, the Frozen-QK model can solve the retrieval task by forming induction heads. 

\begin{figure}[h!]
    \centering
        \centering
        \includegraphics[width=0.7\linewidth]{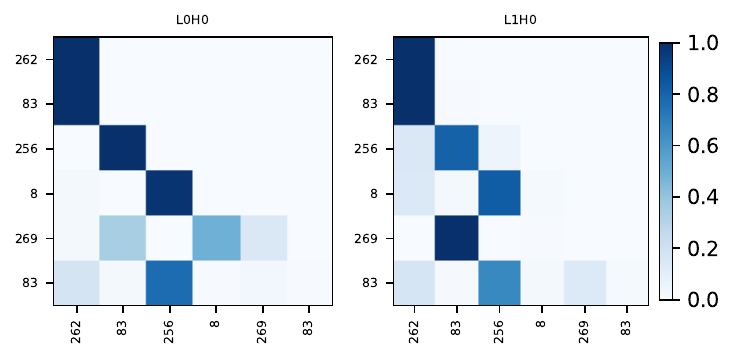}
        \caption{The Frozen-QK model can solve the retrieval task by forming an
        induction head. In the first head, each token attends to the previous
        one; in particular, the query token \texttt{83} is attended by
        \texttt{256}. In the head of the second layer, the correct token
        \texttt{256} is retrieved.}        \label{fig:induction_head}
\end{figure}

These results naturally raises the question: how expressive is Frozen-QK? To answer this, we prove the following result showing Frozen-QK can approximate a wide class of functions:

\begin{theorem}[Universal Approximation of Frozen-QK] 
%
Every continuous causal function with compact support can be approximated arbitrarily well by one layer of multihead attention and MLP, where query and key weight matrices are frozen at random initialization. 
\label{thm:frozen-qk}
\end{theorem}

The proof can be found in Appendix \ref{sec:frozen-qk-proof}. In summary, the proof leverages the fact that standard transformers are universal approximators of such functions \citep{yun2020are}, and lifts this universal approximation to random feature transformers. 
Note that in practice, there are also the value weights and the MLP layers in the attention module, so we do expect the representation power to be even stronger, which is evidenced by empirical results. 

Furthermore, the proof sheds light on why MixiT cannot be a universal approximator: as $Q_k$ and $K_k$ are input-\textit{independent} in MixiT, the random feature $ g_i = x\ \text{Softmax} ( Q_k^\top K_k ) $ in MixiT is \textit{linear}. Hence non-linear functions in $x$, such as the induction head function, cannot be approximated.

\paragraph{Role of MLPs in knowledge storage.}
Previous works have highlighted the importance of MLPs in storing knowledge
\citep{dai2022knowledge, geva2023dissecting, geva2022transformer, geva2021transformer, yu2024locating, chughtai2024summing, meng2022locating, merullo2024language}. They posit that specific facts from the training data are stored in specific knowledge neurons. These works support our findings, in that MLPs are crucial in knowledge memorization.
However, our work does not prescribe knowledge localization, i.e. we don't attribute memorizing specific facts to specific neurons.

Our work adds characterization on knowledge memorization in more recent works on knowledge circuits \citep{knowledgeCircuits} and query localization \citep{queryLocalization}, where MLPs and attention are found to \textit{collaborate} on knowledge memorization, e.g. where attention selects the appropriate knowledge neurons depending on the query. Our work shows that even with static random attention weights, such as in Frozen-QK, attention and MLPs can still collaborate effectively, as evident in language modeling perplexities similar to that of the standard transformer, and the formation of specific input-dependent circuits. However, our results show that the role of MLPs in memorization outweighs that of attention, as evident by the fact that Frozen-MLP achieves much worse accuracy than Frozen-QK or \ours{} (\ref{tab:mem}). 

\paragraph{Relation to random transformers.}
\label{sec:random-transformers}

\citet{zhong2024algorithmic} studies the random transformer, wherein they train only the embedding and unembedding layers, and leave the intermediate layers fixed at random initialization. The random transformer was found to be able to perform nontrivial algorithmic tasks. 
There are several notable differences between \ours{} and the random transformer. The principal difference is that the random attention matrix $I + W^M_\ell - \bar W^M_\ell$ in \ours{} is \textit{input-independent}, and hence cannot develop specialized circuits that adapt to the input, whereas such circuits are possible for the random transformer. Moreover, the MLPs in \ours{} are trained, whereas they remain frozen in the random transformer. These differences allow us to study the roles different trained components play in the transformer architecture.

Furthermore, through in depth hyperparameter searches, we find that the random transformer does not scale well with respect to depth, confirming some of the original findings \citep{zhong2024algorithmic}. 
However, \ours{}, with its carefully initialized random attention matrix designed to preserve signal propagation, does scale with respect to depth, suggesting that the random transformer suffers from signal propagation challenges and rank collapse without appropriate shaping \citep{dong2021attention, noci2022signal, noci2023shaped}.

\begin{table}[h!]
\centering
\small
\begin{tabular}{lccc}
\toprule
Model \textbackslash\ Depth &  2 & 8 & 16 \\
\midrule
\ours{} & 100\% & 100\% & 100\% \\
Random Transformer & 100\% & 23.53\% & 22.88\% \\
\bottomrule
\end{tabular}
\vspace{10pt}
\caption{Performance comparison between \ours{} and the random transformer, with respect to number of layers, on the decimal addition task. The random transformer's performance does not scale well with respect to depth, whereas proper attention matrix shaping helps \ours{} scale with respect to depth.
}
\label{tab:depth_decay}
\end{table}

These observations provide an explanation for the subspace selection hypothesis in \citep{zhong2024algorithmic}, 
namely rank collapse with respect to model depth. Rank collapse refers to the phenomenon where the representations of different inputs become more and more similar as model depth increases, and has previously been studied in e.g. \citep{dong2021attention, noci2022signal, noci2023shaped}.
To substantiate this, we analyze the covariance between hidden representations for the language modeling task on Wikitext. This covariance is calculated between the last layer hidden representations within a sequence (higher covariance means the representations of different tokens are more similar), then averaged across sequences.

\begin{table}[h]
\centering
\small
\begin{tabular}{lccccc}
\toprule
\textbf{Model \textbackslash\ \# Layer } & \textbf{2 } & \textbf{4 } & \textbf{8 } & \textbf{16 } & \textbf{32 } \\
\midrule
Random Transformer & .00218 & .01450 & .02075 & .01946 & .05438 \\
MixiT              & .00088 & .00260 & .00210 & .00154 & .00104 \\
\bottomrule
\end{tabular}
\caption{Covariance between hidden representations of different tokens for models of different depths.}
\label{tab:covariance}
\end{table}

Table~\ref{tab:covariance} shows that the covariance for the Random Transformer steadily increases as the model size increases in depth, but remains steady for MixiT. 
This directly implies representation collapse in the Random Transformer, where the representation of different tokens become increasingly similar as the model size grows in depth.


This covariance degeneracy is especially catastrophic for language modeling, where deep models are required to achieve good performance. This helps explain why the Random Transformer struggles with language modeling and solving tasks when the model size increases beyond a certain number of layers \citep{zhong2024algorithmic}.

\paragraph{Implications for architecture design.}
Notably, some of these results strengthen the argument for empirical approaches for architecture design found in previous work \cite{poli2024mechanistic, carstensen2025frozen}. In particular, \citet{poli2024mechanistic} uses performances in various synthetic tasks to design powerful hybrid architectures.
Our results suggest that in certain circumstances specific architectural components are entirely responsible for some basic operations (e.g. information retrieval), while in others it is a more coordinated effort (e.g. memorization). 


\section{Related work}


\paragraph{Static Attention.}
Several studies explore simplified Transformers with frozen or random components. Notably, random Transformers with fixed layers but trainable embeddings can solve many algorithmic tasks \cite{zhong2024algorithmic}. Similarly, replacing attention with fixed matrices - as in Synthesizer \cite{tay2021synthesizer} or FNet \cite{lee-thorp2021fnet} — retains competitive performance on certain benchmarks, suggesting learned attention is not always necessary. However, retrieval tasks often require flexible, input-dependent attention to form induction-like circuits. While their attention is also random, unlike our \ours{} model it is input-{\em dependent} and so does not isolate the specific tasks for which attention is and is not needed.
We relate our results to their work in detail in Section~\ref{sec:random-transformers}.
Other past work has studied properties of other randomly frozen or lightly trained models, e.g. convolutional networks~\citep{jarrett2009what,saxe2011on,arora2019on}, largely without focusing on specific tasks.
In addition, \citep{hassid2022attention} also studies static attention, but uses the learned attention matrices over a reference corpus, and hence is not completely data-free, and does not have a stable signal propagation guarantee as \ours{}. Indeed the completely data-free attention variants \citep{hassid2022attention} tested, without any reference corpus, perform poorly. Our work shows that such a variant's performance depends heavily on the task: on whether specialized circuits need to be formed.

 
\paragraph{Stable Signal Propagation.} 
While there is a long line of work studying signal propagation in deep neural networks \cite{schoenholz2016deep,poole2016exponential,lee2017deep,yang2017mean}, it was only more recently that \citet{martens2021rapid} introduced the concept that modifying activation functions can significantly improve stability of signal propagation, and leading to rapid training of large scale vision models without using normalization and skip connections \cite{zhang2022deep}. 
This was later understood to yield a stable scaling limit, which is characterized by an SDE of the covariance matrix \cite{li2022neural}. 
The covariance SDE framework is then used to understand how to design and shape general non-linearities like the Transformer self-attention \cite{noci2023shaped}, resolving the rank collapse issue \cite{dong2021attention,noci2022signal} and has shown strong performance despite a much simplified Transformer block \cite{he2023simplifying}. 
Our theoretical result \cref{thm:mixit_cov_sde} also follows from this framework. 
\paragraph{Modular Tasks.}
Several works have investigated the capabilities of Transformers in the classes of tasks analyzed here, including arithmetic \cite{nogueira2021investigating}. 
The role of feedforward layers in memorization in the Transformer architecture has been studied in \citet{geva2020transformer} and their inductive biases and scaling properties have been scrutinized \cite{bachmann2023scaling}. In this context, our work shows their relevance in conjuction with trainable or fixed attention. Orthogonal to our work, memorization has also been studied to understand generalization in neural networks \cite{zhang2016understanding, arpit2017closer, anagnostidis2022curious}.

\paragraph{Mechanistic Interpretability.}
Other closely related work is in the mechanistic interpretability, which aims to understand LLMs via examining and modifying their internals~\citep{bereska2024mechanistic}.
Closely related is work related to identifying and understanding the behavior of induction heads~\citep{olsson2022context,edelman2024the, bietti2023birth} and in-context learning \cite{chan2022transformers}. 
Our work demonstrates that the performance separation between input-dependent and input-{\em independent} attention is largely driven by the latter's inability to form induction heads.
\citep{chen2024distributional} also studies different components of the transformer model, but from a model dynamics point of view, analyzing differences in gradients between components, and on synthetic tasks such as indirect object identification and factual recall. 
Finally, \citet{meng2022locating} which reverse-engineer how different Transformer components support behaviors like memorization, retrieval, and generalization.
\paragraph{Efficient Attention.}
A last area that has seen significant effort at understanding attention is that of efficient Transformers~\citep{tay2022efficient}.
While we study what happens when attention is replaced with a fixed input-independent matrix, this field has studied various useful aspects of the attention matrix such as attention sinks~\citep{xiao2024efficient} and compression~\citep{kim2024lexico}.
As our work demonstrates that for many tasks the full power of input-dependent attention is not needed, it may have its own implications for efficiency, e.g. by removing the need for the KV-cache.

\section{Conclusion}

In this work, we designed a spectrum of model architectures to systematically study the components of a transformer model. We found that, surprisingly, trainable attention is not required to form specialized circuits, and Frozen-QK can indeed perform well on language modeling. We also identified an obstruction towards scaling depth-wise in prior random models, and designed a principled remedy. Our work sheds important light on the functionalities of different model components, such as attention and MLPs, and shows how each is crucial for different tasks.


\paragraph{Limitations and future work.}
Our work studied basic algorithmic, mathematical reasoning, and language modeling tasks. It is an exciting future direction to extend our study to more complex tasks such as human reasoning, which demands a wide-ranging skillset that does not fall neatly on one side of the retrieval-memorization spectrum.
Future work might study hybrid training schedules, in which only a subset of architectural modules remain trainable -- or are gradually unfrozen -- which may strike an even better accuracy–efficiency trade-off. Finally, probing how these architectural choices interact with emerging interpretability and safety techniques constitutes an exciting avenue for further research.


\section*{Acknowledgments }

We are very grateful for the insightful discussions with Danqi Chen and Boris Hanin.

\bibliographystyle{plainnat}
\bibliography{infinite}


\newpage
\appendix 

\section*{Appendix}


\section{Proof of \cref{thm:mixit_cov_sde}}

\begin{theorem*}[\ours{} Covariance SDE]
%
Consider the \ours{} recursion $h_{\ell+1} = \text{Attn}(h_\ell)$ defined by \cref{eq:mixit} at initialization. 
Then as the width $n$ and depth $d$ go to infinity with $\frac{d}{n} \to \bar\tau > 0$, the upper triangular entries of the covariance matrix $\Phi_\ell = \frac{1}{n} h_\ell^\top h_\ell$ flattened to a vector in $\mathbb{R}^{m(m+1)/2}$ converges to the solution of the following SDE 
\begin{equation}
    d\Phi_\tau = \left[ \frac{1}{m} \Tr(\Phi_\tau) - M(\Phi_\tau) \right] \, d\tau + [\Sigma^v(\Phi_\tau) + \Sigma^M(\Phi_\tau)]^{1/2} \, dB_\tau \,, 
\end{equation}
where $M(\Phi) = \frac{1}{m^2} \sum_{\alpha\beta=1}^m \Phi^{\alpha\beta}$ is the average over all entries, $B_\tau$ is a standard Brownian motion in $\mathbb{R}^{m(m+1)/2}$, $\Sigma^v(\Phi)^{\alpha\beta,\gamma\delta} = \Phi^{\alpha\gamma} \Phi^{\beta\delta} + \Phi^{\alpha\delta} \Phi^{\beta\gamma}$ and 
\begin{equation}
    \Sigma^M(\Phi)^{\alpha\beta,\gamma\delta} = 
    \delta_{\alpha\gamma} C(\Phi^{\bullet\beta},\Phi^{\bullet\delta}) 
    + \delta_{\alpha\delta} C(\Phi^{\bullet\beta},\Phi^{\bullet\gamma}) 
    + \delta_{\gamma\beta} 
    C(\Phi^{\bullet\delta},\Phi^{\bullet\alpha}) 
    + \delta_{\beta\delta} 
    C(\Phi^{\bullet\alpha},\Phi^{\bullet\gamma}) \,, 
\end{equation}
where $\delta_{\alpha\gamma}$ is the Kronecker delta, 
$C(\Phi^{\bullet\beta},\Phi^{\bullet\delta}) = \frac{1}{m} \langle \Phi^{\bullet\beta}, \Phi^{\bullet\delta} \rangle - \overline \Phi^{\bullet\beta} \overline \Phi^{\bullet\delta}$, 
$\Phi^{\bullet\beta} = [ \Phi^{\alpha\beta} ]_{\alpha=1}^m$ is the $\beta$-th column vector, 
and $\overline \Phi^{\bullet\beta} = \frac{1}{m} \sum_{\alpha} \Phi^{\alpha\beta}$ is the average. 
\end{theorem*}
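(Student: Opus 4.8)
The plan is to follow the now-standard recipe for deriving covariance SDEs from Markov chains of the form $\Phi_{\ell+1} = \Phi_\ell + \tfrac{1}{n}\,\xi_\ell(\Phi_\ell)$, where $\xi_\ell$ is a mean-zero (plus a small drift) random perturbation; see \citet{li2022neural,noci2023shaped}. The three ingredients to extract are: (i) the conditional drift $\ee{\Phi_{\ell+1} - \Phi_\ell \mid \Phi_\ell}$, which should equal $\tfrac{1}{n}$ times the drift vector $\tfrac{1}{m}\Tr(\Phi) - M(\Phi)$ up to lower-order terms; (ii) the conditional covariance $\Cov(\Phi_{\ell+1} \mid \Phi_\ell)$, which should equal $\tfrac{1}{n}(\Sigma^v(\Phi) + \Sigma^M(\Phi))$ up to lower-order terms; and (iii) a bound on higher moments (e.g. fourth moments of the increment are $O(1/n^2)$) together with a Lipschitz/growth bound on the coefficients, so that a standard weak-convergence theorem for Markov chains to diffusions applies. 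Once these are in hand, the convergence $\Phi_{\floor{\tau n}} \Rightarrow \Phi_\tau$ on $[0,\bar\tau]$ follows from the general machinery, and the identification $\tfrac{d}{n}\to\bar\tau$ just fixes the time horizon.

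First I would write the one-step recursion explicitly. With $A_\ell = I + W^M_\ell - \bar W^M_\ell$, we have $h_{\ell+1} = W^v_\ell h_\ell A_\ell$, so
\begin{equation}
\Phi_{\ell+1} = \frac{1}{n}A_\ell^\top h_\ell^\top (W^v_\ell)^\top W^v_\ell h_\ell A_\ell \,.
\end{equation}
Conditioning on $h_\ell$ (equivalently on $\Phi_\ell$, in the Gaussian/DMFT sense used in this framework), the entries of $W^v_\ell$ are i.i.d. $\mathcal{N}(0,1/n)$, so $\ee{\frac{1}{n}(W^v_\ell)^\top W^v_\ell \mid h_\ell} = I_n$; this gives the "leading" term $A_\ell^\top \Phi_\ell A_\ell$, and the fluctuation of $\frac{1}{n}(W^v_\ell)^\top W^v_\ell$ around $I$ is $O(1/\sqrt n)$ in the relevant sense and contributes the $\Sigma^v$ piece of the covariance — this is exactly the Wishart-fluctuation computation appearing in \citet{li2022neural}, yielding $\Sigma^v(\Phi)^{\alpha\beta,\gamma\delta} = \Phi^{\alpha\gamma}\Phi^{\beta\delta} + \Phi^{\alpha\delta}\Phi^{\beta\gamma}$. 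Next expand $A_\ell^\top \Phi A_\ell = \Phi + (W^M_\ell - \bar W^M_\ell)^\top \Phi + \Phi (W^M_\ell - \bar W^M_\ell) + (W^M_\ell - \bar W^M_\ell)^\top \Phi (W^M_\ell - \bar W^M_\ell)$. Since $W^M_{\ell,ij}\sim\mathcal{N}(0,\tfrac{1}{\sqrt{nm}})$, the linear terms are mean-zero (after accounting for the centering $\bar W^M_\ell$) and of order $(nm)^{-1/4}$; they are the source of the $\Sigma^M$ covariance contribution. The quadratic term $(W^M_\ell)^\top \Phi W^M_\ell$ has conditional expectation of order $\tfrac{1}{\sqrt{nm}}\cdot m \cdot (\text{trace-type contraction of }\Phi)$, which is where the drift $\tfrac{1}{m}\Tr(\Phi) - M(\Phi)$ comes from: the $\Tr(\Phi)/m$ piece is the raw $\ee{W^{M\top}\Phi W^M}$ and the $-M(\Phi)$ correction is the contribution of the column-centering $\bar W^M_\ell$. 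Computing $\ee{[(W^M_\ell - \bar W^M_\ell)^\top \Phi]^{\alpha\beta}[(W^M_\ell - \bar W^M_\ell)^\top \Phi]^{\gamma\delta} + \text{(symmetric terms)}}$ carefully, using that centering replaces the covariance of a column of $W^M$ with the covariance kernel $C(\cdot,\cdot)$ defined in the statement, produces exactly the four Kronecker-delta terms of $\Sigma^M$.

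The main obstacle I anticipate is the bookkeeping in step (ii): one must correctly symmetrize over the upper-triangular index set $\{(\alpha,\beta):\alpha\le\beta\}$, keep track of which cross-terms between the $W^v$-fluctuation, the linear-in-$W^M$ terms, and the quadratic-in-$W^M$ terms survive at order $1/n$ (the $W^v$ and $W^M$ blocks are independent, so their cross-covariance vanishes and $\Sigma^v$ and $\Sigma^M$ genuinely add), and verify that the centering operator $W^M \mapsto W^M - \bar W^M$ both (a) removes the "average" mode so that row sums of $A_\ell$ equal $1$ and (b) turns every appearance of $\langle \Phi^{\bullet\beta},\Phi^{\bullet\delta}\rangle/m$ into the centered covariance $C(\Phi^{\bullet\beta},\Phi^{\bullet\delta})$. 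A secondary technical point is justifying that it suffices to track the deterministic kernel $\Phi_\ell$ rather than the full empirical distribution of rows of $h_\ell$ — this is the usual Gaussian-conditioning / propagation-of-chaos argument and can be imported verbatim from \citet{li2022neural,noci2023shaped}, as can the moment bounds needed to invoke the diffusion-approximation theorem. I would state those as cited lemmas and devote the body of the proof to the explicit drift and diffusion computations above.
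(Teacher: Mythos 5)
Your proposal is correct and follows essentially the same approach as the paper: compute the one-step conditional drift and covariance of $\Phi_\ell$ at the $1/n$ scale and invoke the Markov-chain-to-diffusion convergence result of \citet{li2022neural}; the paper organizes this by formally splitting the update into a pure-$W^v$ recursion (whose $\Sigma^v$ it cites) and a pure-mixing recursion $h_{\ell+1}=h_\ell A_\ell$ and then adding the two contributions, which is exactly what your independence argument for the full joint map $\Phi_{\ell+1}=\tfrac1n A_\ell^\top h_\ell^\top (W^v_\ell)^\top W^v_\ell h_\ell A_\ell$ justifies. One caveat worth fixing: you inherit a typo from the statement's display equation --- the scaling actually used in the appendix proof (and in the positional-embedding paragraph) is $W^M_{\ell,ij}\sim\mathcal N\!\left(0,\tfrac1{nm}\right)$, i.e.\ $\tfrac1{\sqrt{nm}}$ times a standard normal, under which the linear and quadratic terms in $W^M_\ell-\bar W^M_\ell$ are $O(n^{-1/2})$ and $O(n^{-1})$ respectively, rather than the $(nm)^{-1/4}$ and $\sqrt{m/n}$ orders your bookkeeping would give.
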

\begin{proof}
Firstly, we recall that based on \citet{li2022neural}, the linear network covariance matrix $\Phi_\ell = \frac{1}{n} h_\ell^\top h_\ell$ for the recursion $h_{\ell+1} = W^v_\ell h_\ell$ for $W^v_{\ell,ij} \sim \mathcal{N}(0, \tfrac{1}{n})$ satisfies the Markov chain 
\begin{equation}
    \Phi_{\ell+1} = \Phi_\ell + \frac{ \Sigma^v(\Phi_\ell)^{1/2} \xi_\ell }{\sqrt{n}} \,, 
\end{equation}
where $\xi_\ell$ is a zero mean and identity covariance random variable, and the diffusion coefficient is $\Sigma^v(\Phi)^{\alpha\beta,\gamma\delta} = \Phi^{\alpha\gamma} \Phi^{\beta\delta} + \Phi^{\alpha\delta} \Phi^{\beta\gamma}$. 
Therefore, it is sufficient to isolate the contribution of the mixing component alone, and we will add the effect of the two components. 

To this end, we consider the equivalent recursion 
\begin{equation}
    h_{\ell+1} = h_\ell \left( I_m + \tfrac{1}{\sqrt{nm}} (W^M_\ell - \bar W^M_\ell) \right) \,, 
\end{equation}
where we consider $W^M_{\ell,ij} \sim \mathcal{N}( 0, 1 )$ instead of $\mathcal{N}(0, \tfrac{1}{nm})$ due to the pre-factor, and $\bar W^M_{ij} = \tfrac{1}{m} \sum_{k=1}^m W^M_{kj}$ replaces each entry by its corresponding column average. 

Next, we will observe that $\Phi_\ell$ satisfies a straight forward recursion 
\begin{equation}
\begin{aligned}
    \Phi_{\ell+1} &= \frac{1}{n} h_{\ell+1}^\top h_{\ell+1} \\ 
    &= \left( I_m + \tfrac{1}{\sqrt{nm}} (W^M_\ell - \bar W^M_\ell) \right)^\top \Phi_\ell 
    \left( I_m + \tfrac{1}{\sqrt{nm}} (W^M_\ell - \bar W^M_\ell) \right) \\ 
    &= 
    \Phi_\ell + 
    \tfrac{1}{\sqrt{nm}} \left[ \left( W^M_\ell - \bar W^M_\ell \right)^\top \Phi_\ell 
    + \Phi_\ell \left( W^M_\ell - \bar W^M_\ell \right) \right] \\
    &\quad\quad + 
    \tfrac{1}{nm} \left( W^M_\ell - \bar W^M_\ell \right)^\top \Phi_\ell \left( W^M_\ell - \bar W^M_\ell \right) \,, 
\end{aligned}
\end{equation}
which naturally separates itself into the diffusion and drift components via the coefficient scale of $\tfrac{1}{\sqrt{nm}}$ and $\tfrac{1}{nm}$ respectively. 

We will compute the drift term next. 
Here we will drop some super and subscripts to reduce clutter, and write 
\begin{equation}
\begin{aligned}
    \sum_{\alpha,\beta=1}^m 
    \mathbb{E}_\ell ( W-\bar W )^{\alpha\gamma} \Phi^{\alpha\beta} ( W-\bar W )^{\beta\delta} 
    &= 
    \sum_{\alpha\beta} 
    \Phi^{\alpha\beta} 
    \mathbb{E}_\ell \left[ 
    W^{\alpha\gamma} W^{\beta\delta}
    - \frac{1}{m^2} \sum_{\mu\nu} W^{\mu\gamma} \bar W^{\nu\delta}
    \right] 
    \\ 
    &= 
    \sum_{\alpha\beta} 
    \Phi^{\alpha\beta} 
    ( \delta_{\alpha\beta} \delta_{\gamma\delta} - \frac{1}{m^2} \sum_{\mu\nu} \delta_{\mu\nu} \delta_{\gamma\delta} ) \\ 
    &= 
    \delta_{\gamma\delta} \sum_{\alpha\beta} 
    \Phi^{\alpha\beta} 
    \left(\delta_{\alpha\beta} - \frac{1}{m} \right) \,, 
\end{aligned}
\end{equation}
where $\mathbb{E}_\ell [ \,\cdot\, ] = \mathbb{E}[ \,\cdot\, | \mathcal{F}_\ell ] $ and $\mathcal{F}_\ell = \sigma( \{h_k\}_{k \leq \ell} )$, which translates to the final drift of 
\begin{equation}
    \frac{1}{n} \left( \frac{1}{m} \Tr(\Phi) - M_\Phi \right) I_n \,, 
\end{equation}
where $M_\Phi = \frac{1}{m^2} \sum_{\alpha\beta} \Phi^{\alpha\beta}$ is the average over all entries. 

To calculate a single entry of the diffusion coefficient $\Sigma(\Phi)^{\alpha\beta,\gamma\delta}$, we will write $\widetilde W = W - \bar W$ and compute 
\begin{equation}
\begin{aligned}
    &\Sigma^M(\Phi)^{\alpha\beta,\gamma\delta} \\ 
    &= \frac{1}{m} \sum_{\mu,\nu=1}^m 
    \mathbb{E}_\ell ( \widetilde W^{\mu\alpha} \Phi^{\mu\beta} + \Phi^{\alpha\mu} \widetilde W^{\mu\beta} ) 
    ( \widetilde W^{\nu\gamma} \Phi^{\nu\delta} + \Phi^{\gamma\nu} \widetilde W^{\nu\delta} ) \\ 
    &= \frac{1}{m}
    \sum_{\mu,\nu} 
    \mathbb{E}_\ell \left[ 
    \widetilde W^{\mu\alpha} \Phi^{\mu\beta} 
    \widetilde W^{\nu\gamma} \Phi^{\nu\delta} 
    +
    \widetilde W^{\mu\alpha} \Phi^{\mu\beta} 
    \Phi^{\gamma\nu} \widetilde W^{\nu\delta} 
    + 
    \Phi^{\alpha\mu} \widetilde W^{\mu\beta} 
    \widetilde W^{\nu\gamma} \Phi^{\nu\delta} 
    + 
    \Phi^{\alpha\mu} \widetilde W^{\mu\beta} 
    \Phi^{\gamma\nu} \widetilde W^{\nu\delta} 
    \right] \,. 
    %
\end{aligned}
\end{equation}
%
%

At this point we focus on one term and compute 
\begin{equation}
\begin{aligned}
    \mathbb{E}_\ell \, \widetilde W^{\mu\alpha} \widetilde W^{\nu\beta} 
    &= \mathbb{E}_\ell \, ( W^{\mu\alpha} - \bar W^{\mu\alpha} ) ( W^{\nu\beta} - \bar W^{\nu\beta} ) \\ 
    &= \mathbb{E}_\ell \, ( W^{\mu\alpha} W^{\nu\beta} - W^{\mu\alpha} \bar W^{\nu\beta} - \bar W^{\mu\alpha} W^{\nu\beta} + \bar W^{\mu\alpha} \bar W^{\nu\beta} ) \,, 
    \end{aligned}
\end{equation}
$\delta_{\alpha\gamma}$ is the Kronecker delta, and we can separate further then compute 
\begin{equation}
\begin{aligned}
    \mathbb{E}_\ell \, W^{\mu\alpha} W^{\nu\beta} &= \delta_{\mu\nu} \delta_{\alpha\beta} \,, \\ 
    \mathbb{E}_\ell \, W^{\mu\alpha} \frac{1}{m} \sum_{\nu'=1}^m W^{\nu'\beta} 
    &= \frac{1}{m} \sum_{\nu'} \delta_{\mu\nu'} \delta_{\alpha\beta} 
    = \frac{1}{m} \delta_{\alpha\beta} \,, \\ 
    \mathbb{E}_\ell \, \bar W^{\mu\alpha} W^{\nu\beta} 
    &= \frac{1}{m} \delta_{\alpha\beta} \,, \\ 
    \mathbb{E}_\ell \, \bar W^{\mu\alpha} \bar W^{\nu\beta} 
    &= \frac{1}{m^2} \sum_{\mu',\nu'=1}^m \delta_{\mu'\nu'} \delta_{\alpha\beta} 
    = \frac{1}{m} \delta_{\alpha\beta} \,. 
\end{aligned}
\end{equation}

This implies 
\begin{equation}
    \mathbb{E}_\ell \, \widetilde W^{\mu\alpha} \widetilde W^{\nu\beta} 
    = 
    \delta_{\mu\nu} \delta_{\alpha\beta} 
    - \frac{1}{m} \delta_{\alpha\beta} 
    = ( \delta_{\mu\nu} - \tfrac{1}{m} ) \delta_{\alpha\beta} \,. 
\end{equation}

At this point, we return to calculating $\Sigma^M(\Phi)^{\alpha\beta,\gamma\delta}$ and write 
\begin{equation}
\begin{aligned}
    \Sigma^M(\Phi)^{\alpha\beta,\gamma\delta} 
    &= 
    \frac{1}{m} \sum_{\mu\nu} 
    ( \delta_{\mu\nu} - \tfrac{1}{m} ) \delta_{\alpha\gamma} \Phi^{\mu\beta} \Phi^{\nu\delta} 
    + 
    ( \delta_{\mu\nu} - \tfrac{1}{m} ) \delta_{\alpha\delta} \Phi^{\beta\mu} \Phi^{\gamma\nu} \\ 
    &\quad\quad + 
    ( \delta_{\mu\nu} - \tfrac{1}{m} ) \delta_{\gamma\beta} \Phi^{\mu\delta} \Phi^{\alpha\nu} 
    + 
    ( \delta_{\mu\nu} - \tfrac{1}{m} ) \delta_{\beta\delta} \Phi^{\alpha\mu} \Phi^{\gamma\nu} 
    \\ 
    &= \delta_{\alpha\gamma} C(\Phi^{\bullet\beta},\Phi^{\bullet\delta}) 
    + \delta_{\alpha\delta} C(\Phi^{\bullet\beta},\Phi^{\bullet\gamma}) 
    + \delta_{\gamma\beta} 
    C(\Phi^{\bullet\delta},\Phi^{\bullet\alpha}) 
    + \delta_{\beta\delta} 
    C(\Phi^{\bullet\alpha},\Phi^{\bullet\gamma}) \,, 
\end{aligned}
\end{equation}
where $C(\Phi^{\bullet\beta},\Phi^{\bullet\delta}) = \frac{1}{m} \langle \Phi^{\bullet\beta}, \Phi^{\bullet\delta} \rangle - \overline \Phi^{\bullet\beta} \overline \Phi^{\bullet\delta}$, 
$\Phi^{\bullet\beta} = [ \Phi^{\alpha\beta} ]_{\alpha=1}^m$ is the $\beta$-th column vector, 
and $\overline \Phi^{\bullet\beta} = \frac{1}{m} \sum_{\alpha} \Phi^{\alpha\beta}$ is the average. 

To complete the proof, we will invoke the Markov chain convergence to SDE results in the Skorohod topology, see for example \citet[Proposition A.6]{li2022neural}, which gives us the desired result. 

\end{proof}

\section{Proof of Theorem~\ref{thm:frozen-qk}} 
\label{sec:frozen-qk-proof}
\begin{theorem*}[Universal Approximation of Frozen-QK] 
%
Every continuous causal function with compact support can be approximated arbitrarily well by one layer of multihead attention and MLP, where query and key weight matrices are frozen at random initialization. 
\end{theorem*}

\begin{proof} 


%
%
The proof will follow the universal approximation theory for random feature regression in Banach spaces by \citet{NeufeldSchmocker2023}, where we will set up the relevant Banach spaces. 

Let the space of input sequences of length $m$, where each element is a vector in $V \subset \mathbb{R}^n$, where $V$ is compact, be denoted by $U = V^m$.
The Banach space of continuous functions that map an input sequence from a compact set $K \subset U$ to an output sequence in $U$ is denoted by $(C(K, U), ||\cdot||_{\infty})$, i.e. equipped with the sup-norm. 
The subspace of continuous \textbf{causal} 
functions, which we can denote as $C_{\text{causal}}(K, U)$, consists of all functions $f \in C(K, U)$ that satisfy the following property:
For any given position $i \in \{1, \dots, m\}$ and any two input sequences $u=(u_1, \dots, u_m)$ and $v=(v_1, \dots, v_m)$ in the domain $K$:
\[
\text{If } u_j = v_j \text{ for all } j \le i, \text{ then } (f(u))_i = (f(v))_i
\]
This condition ensures that the output at position $i$ only depends on the input up to position $i$. Since this subspace is a closed linear subspace of the Banach space $C(K, U)$, it is itself a Banach space with the same supremum norm.

We define a random feature model based on a single multi-head attention layer with causal masking. Let the input be a sequence of hidden states $h \in \mathbb{R}^{n \times m}$, where $m$ is the sequence length and $n$ is the embedding dimension. The layer has $N_h$ attention heads.
Conceptually, each attention head $i \in \{1, \dots, N_h\}$ generates a single, matrix-valued \textbf{random feature}. This feature is the product of the hidden space $h$ and an attention pattern $\mathcal{A}_i$, which is a function that maps the input sequence $h$ to an $m \times m$ matrix.
The randomness for each feature $\mathcal{A}_i$ comes from a pair of \textbf{frozen weight matrices}, $(W^Q_i, W^K_i)$, where $W^Q_i, W^K_i \in \mathbb{R}^{n \times d_k}$. These are drawn independently for each head from a random distribution (e.g. $\mathcal{N}(0, 1)$) at initialization and are not trained.

The random feature (attention pattern) for head $i$ is defined as:
\begin{equation}
    g_i = h \, \mathcal{A}_i(h; W^Q_i, W^K_i) = h \, \text{Softmax}\left(\frac{(W^Q_i h)^\top (W^K_i h)}{\sqrt{d_k}} + M_{\text{causal}}\right) \,, 
\end{equation}
where $M_{\text{causal}}$ is the causal mask matrix that prevents attention to future positions.

By Theorem 3 in \citet{yun2020are}, Transformers are universal approximators for compactly supported sequence-to-sequence functions.
%
Furthermore, as \citet{NeufeldSchmocker2023} shows that universal approximation can be lifted from deterministic feature functions to random feature functions in Banach spaces \citep[Theorem 3.2]{NeufeldSchmocker2023}, we can lift this universal approximation to the random features $g_i$, to approximate any causal function $f \in C_{\text{causal}}(K, U)$ to arbitrary precision.


Note that, given that both the value weight matrix $W^V_i$ of each head as well as the weights of the MLP layer are trainable, this is strictly more expressive than the random feature model we just constructed with $g_i$. 
Therefore one layer of self-attention and MLP must also be a universal approximator of causal functions, as desired. 

\end{proof}

\section{Additional Experimental Details}
\label{sec:exp_details}

\subsection{Hyperparameter selection}
We select for the optimal hyperparameters, shown in Table~\ref{tab:hyperparameter-best}, for each task using a grid search. The search ranges for each task are shown in Table~\ref{tab:hyperparameter-range}, determined a priori depending on task complexity, e.g. language modeling is inherently more complex than memorization.

\begin{table}[h!]
\centering
\small
\begin{tabular}{lccccr}
\toprule
Task \textbackslash\ Hyperparameter & \# Layers & Hidden dimension & \# Heads & Learning rate & Batch size \\
\midrule
Algorithmic & [2, 4, 8] & [512, 1024] & [4, 8, 16, 64] & [1e-3, 5e-4, 1e-4] & [128, 256, 512] \\
Retrieval & [2, 4, 8] & [512, 1024] & [4, 8, 16, 64] & [1e-3, 5e-4, 1e-4] & [256, 512, 1024] \\
hop$_k$ & [5] & [256, 512, 1024] & [8] & [1e-4, 5e-4] & [128, 256] \\
Memorization & [2, 4, 8] & [512, 1024] & [4, 8, 16, 64] & [1e-3, 5e-4, 1e-4] & [256, 512, 1024] \\
Yelp & [4, 8] & [512, 1024] & [4, 8, 16, 64] & [1e-3, 5e-4, 1e-4] & [256, 512, 1024] \\
Language modeling & [8, 12] & [512, 1024] & [4, 8, 16] & [1e-3, 5e-4, 1e-4] & [256, 512, 1024] \\
\bottomrule
\end{tabular}
\vspace{10pt}
\caption{Hyperparameter ranges used during grid search, for all architectures.
Algorithmic tasks include decimal addition, Dyck-1 parentheses balancing, and modular addition.}
\label{tab:hyperparameter-range}
\end{table}

Furthermore, we use a sequence length of 256 for Yelp sentiment classification and language modeling tasks. Language modeling tasks are trained for 40,000 steps.

For the $k$-hop induction heads task, following \citep{sanford2024induction}, we generate data using a sequence length of 100, and a maximum of $k=16$ hops. We use a training set of 100,000 samples, a test set of 100 samples, and train for 5,000 steps.

For a fair comparison across models on the memorization task (Table~\ref{tab:mem}), we use the same hidden dimension, with $L=2$, and $n_h=4$ heads. We train for $10,000$ steps with a learning rate of $0.005$. 
Similarly, for a fair comparison on the retrieval task, reported in Figure~\ref{fig:retrieval_accuracies},
we fix the embedding dimension at $n=1024$, with $4$ heads and $2$ layers, which suffice to learn induction heads \citep{elhage2021InductionHead, edelman2024the}. We search for the optimal learning rate for each architecture. Note that these two sets of results are meant to capture model performance when the accuracies are not saturated at 100\%, to give a meaningful comparison between models.

\begin{table}[h!]
\centering
\small
\begin{tabular}{lccccr}
\toprule
Task \textbackslash\ Hyperparameter & \# Layers & Hidden dimension & \# Heads & Learning rate & Batch size \\
\midrule
Decimal Addition & 8 & 512 & 64 & 1e-3 &  128 \\
Dyck-1 & 4 & 512 & 64 & 1e-3 &  512 \\
Modular addition & 2 & 512 & 32 & 1e-3 &  256 \\
Retrieval & 2 & 1024 & 4 & 1e-4 & 1024 \\
hop$_k$ & 5 & 512 & 8 & 1e-4 & 128 \\
Memorization & 2 & 1024 & 4 & 1e-3 & 256 \\
Yelp & 4 & 1024 & 16 & 5e-4 & 256 \\
Wikitext & 12 & 512 & 8 & 5e-4 & 512 \\
Fineweb-edu & 12 & 512 & 8 & 5e-4 & 512 \\
\bottomrule
\end{tabular}
\vspace{10pt}
\caption{Optimal hyperparameters selected for \ours{}.
}
\label{tab:hyperparameter-best}
\end{table}

\section{Further Discussion}
This section contains further discussion on experiments and results.

\paragraph{Improved throughput for \ours{} and Frozen-QK}

Leads to a notable improvement in training time for language modeling.
For instance, on the Fineweb-edu dataset, with the same architecture and hyperparameters on the same infrastructure, Frozen-QK trains 1267.1 samples per second on average, whereas the standard transformer trains 1022.8 samples per second on average, while achieving similar log perplexities, 3.05 for standard and 3.15 for Frozen-QK. This represents a 23.9\% improvement in training throughput, leading to a 23.9\% speedup in terms of wall clock time.
\ours{} trains even faster, with 1349.0 training samples per second, or a 32.0\% improvement in throughput. However, \ours{} comes with noticeable degradation in perplexity.

\end{document}